%
\documentclass[runningheads]{llncs}
\usepackage{graphicx}
\usepackage{enumitem}
\usepackage{booktabs} 
\usepackage{xspace}
\usepackage[mathscr]{euscript}
\usepackage{algorithm, algorithmicx}
\usepackage[noend]{algpseudocode}
\usepackage{ amssymb }
\usepackage{amsmath}
\usepackage{color}
\usepackage{caption}
\usepackage{subcaption}
\captionsetup{compatibility=false}
\usepackage{nicefrac}       
\usepackage{wrapfig}
\usepackage{tikz}
\usepackage{tikz-cd}
\usepackage{bbm}
\usepackage{mathabx,graphicx}

\usepackage{microtype}

\DeclareMathAlphabet{\mathpzc}{OT1}{pzc}{m}{it}


\newcommand{\eg}{{e.g.}\xspace}
\newcommand{\etc}{{etc.}\xspace}


\newcommand{\ignore}[1]{}

\newcommand{\setof}[1]{\ensuremath{\left\{#1\right\}}}


\newcommand{\mypara}[1]{\noindent{\bf #1.}}

\newcommand{\Reals}{\mathbb{R}}

\newcommand{\PosZReals}{\mathbb{R}^{\ge 0}} 
\newcommand{\Nat}{\mathbb{N}}

\newcommand{\PowerSet}[1]{\mathscr{P}\left (#1 \right)}

\newcommand{\domain}{\mathcal{D}}
\newcommand{\timedomain}{T}

\newcommand{\f}{\varphi}

\newcommand{\validitydomain}{\mathcal{V}}
\newcommand{\validitydomainboundary}{\partial\mathcal{V}}

\newcommand{\pOrder}{\trianglelefteq}
\newcommand{\eqdef}{\mathrel{\stackrel{\makebox[0pt]{\mbox{\normalfont\tiny def}}}{=}}}

\newcommand{\trc}{\mathbf{x}}
\newcommand{\trcY}{\mathbf{y}}


\DeclareMathOperator*{\argmax}{arg\,max}

\setlength{\textfloatsep}{6pt plus 2pt minus 4pt}
\setlength{\dbltextfloatsep}{3pt}
\setlength{\intextsep}{5pt}
\setlength{\abovecaptionskip}{2pt}
\setlength{\belowcaptionskip}{2pt}
\setlength{\parskip}{3pt}
\setlength{\abovedisplayskip}{3pt}
\setlength{\belowdisplayskip}{3pt}
\setlength\abovedisplayshortskip{3pt}
\setlength\belowdisplayshortskip{3pt}

\usepackage[colorinlistoftodos]{todonotes}

\setlist{itemsep=1pt, topsep=2pt}




%

\begin{document}
\title{Time-Series Learning using Monotonic Logical
  Properties}
%
%
\author{Marcell Vazquez-Chanlatte\inst{1} \and
Shromona Ghosh\inst{1} \and
Jyotirmoy V. Deshmukh\inst{2} \and
Alberto Sangiovanni-Vincentelli \inst{1} \and
Sanjit A. Seshia \inst{1}}

\authorrunning{Vazquez-Chanlatte et al.}
%
\institute{University of California, Berkeley, USA\\
\email{\{marcell.vc, shromona.ghosh, alberto, sseshia\}@eecs.berkeley.edu}\\
\and
University of Southern California, USA\\
\email{jdeshmuk@usc.edu}}
\maketitle              

\begin{abstract}
   Cyber-physical systems of today
  are generating large volumes of time-series data. As manual
  inspection of such data is not tractable, the need for learning
  methods to help discover logical structure in the data has
  increased.  We propose a logic-based framework that allows
  domain-specific knowledge to be embedded into formulas in a
  parametric logical specification over time-series data. The key idea
  is to then map a time series to a surface in the parameter space
  of the formula. Given this mapping, we identify the Hausdorff
  distance between boundaries as a natural distance metric
  between two time-series data under the lens of the parametric
  specification.  This enables embedding non-trivial domain-specific
  knowledge into the distance metric and then using off-the-shelf
  machine learning tools to label the data. After labeling the data,
  we demonstrate how to extract a logical specification for each
  label. Finally, we showcase our technique on real world traffic
  data to learn classifiers/monitors for slow-downs and traffic jams.
  \keywords{Specification Mining \and Time-Series Learning \and
    Dimensionality Reduction}
\end{abstract}


\section{Introduction}\label{sec:intro}
Recently, there has been a proliferation of sensors that monitor
diverse kinds of real-time data representing {\em time-series
  behaviors\/} or {\em signals\/} generated by systems and devices
that are monitored through such sensors. However, this deluge can
place a heavy burden on engineers and designers who are not interested
in the details of these signals, but instead seek to discover
higher-level insights in the data.

More concisely, one can frame the key challenge as: ``How does one
automatically identify logical structure or relations within the
data?'' To this end, modern machine learning (ML) techniques for
signal analysis have been invaluable in domains ranging from
healthcare analytics~\cite{kale2014examination} to smart
transportation~\cite{deng2016latent}; and from autonomous
driving~\cite{mccall2007driver} to social media~\cite{liu_sparse}.
However, despite the success of ML based techniques, we believe that
easily leveraging the domain-specific knowledge of non-ML experts
remains an open problem.

At present, a common way to encode domain-specific knowledge into an
ML task is to first transform the data into an {\em a priori\/} known
{\em feature space}, e.g., the statistical properties of a time
series. While powerful, translating the knowledge of domain-specific
experts into features remains a non-trivial endeavor.  More recently,
it has been shown that Parametric Signal Temporal Logic formula along
with a total ordering on the parameter space can be used to extract
feature vectors for learning temporal logical predicates
characterizing driving patterns, overshoot of diesel engine re-flow
rates, and grading for simulated robot controllers in a Massively Open
Online Course~\cite{logicalClustering}. Crucially, the technique of
learning through the lens of a logical formula means that learned
artifacts can be readily leveraged by existing formal methods
infrastructure for verification, synthesis, falsification, and
monitoring. Unfortunately, the usefulness of the results depend
intimately on the total ordering used. The following example
illustrates this point.

\begin{figure}[h]
  \centering \includegraphics[width=3in]{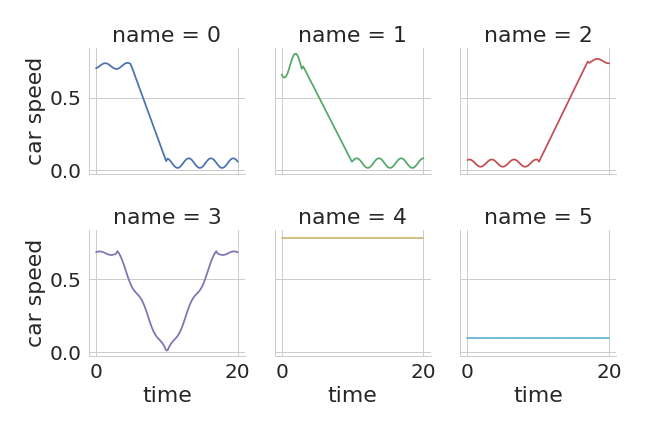}
  \caption{Example signals of car speeds on a
    freeway.}\label{fig:example_traces}
\end{figure}
\mypara{Example:} Most freeways have bottlenecks that lead to traffic
congestion, and if there is a stalled or a crashed vehicle(s) at this
site, then upstream traffic congestion can severely
worsen.\footnote{We note that such data can be obtained from fixed
  mounted cameras on a freeway, which is then converted into
  time-series data for individual vehicles, such as in~\cite{NGSIM}.}
For example, Fig~\ref{fig:example_traces} shows a series of potential
time-series signals to which we would like to assign pairwise
distances indicating the similarity (small values) or differences
(large values) between any two time series. To ease exposition, we
have limited our focus to the car's speed. In signals 0 and 1, both
cars transition from high speed freeway driving to stop and go
traffic.  Conversely, in signal 2, the car transitions from stop and
go traffic to high speed freeway driving. Signal 3 corresponds to a
car slowing to a stop and then accelerating, perhaps due to difficulty
merging lanes. Finally, signal 4 signifies a car encountering no
traffic and signal 5 corresponds to a car in heavy traffic, or a
possibly stalled vehicle.

Suppose a user wished to find a feature space equipped with a measure
to distinguish cars being stuck in traffic. Some properties might be:
\begin{enumerate}[leftmargin=1.5em,itemsep=0pt,parsep=0pt,partopsep=0pt]
\item Signals 0 and 1 should be {\em very\/} close together since both
  show a car entering stop and go traffic in nearly the same manner.
\item Signals 2, 3, and 4 should be close together since the car
  ultimately escapes stop and go traffic.
\item Signal 5 should be far from all other examples since it does not
  represent entering or leaving stop and go traffic.
\end{enumerate}
\begin{figure}[t]
  \centering
  \begin{subfigure}{0.49\textwidth}
    \centering
    \includegraphics[width=\textwidth]{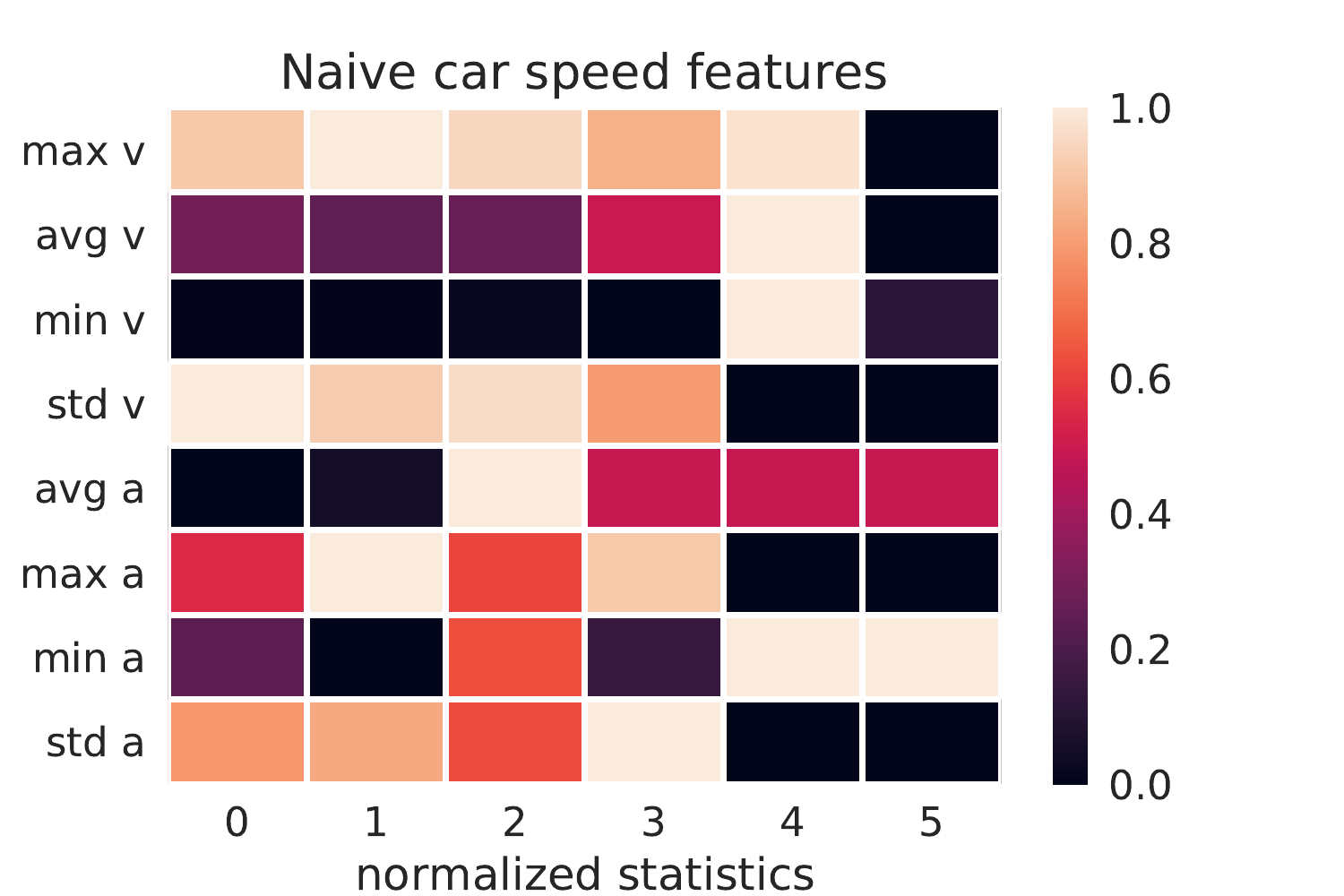}
    \caption{Statistical feature space}\label{fig:normalizedStats}
  \end{subfigure}
  \begin{subfigure}{0.49\textwidth}
    \centering
    \includegraphics[width=\linewidth]{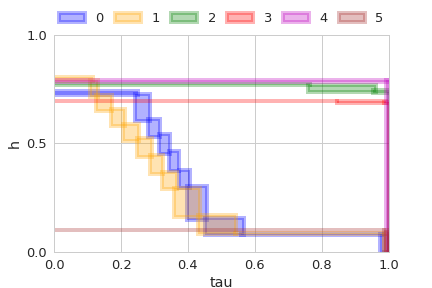}
    \caption{Trade-off boundaries in
      specification.}\label{fig:toy_boundaries}
  \end{subfigure}
\end{figure}

  \begin{wrapfigure}{r}{0.4\textwidth}
    \centering\fbox{\includegraphics[width=0.4\textwidth]{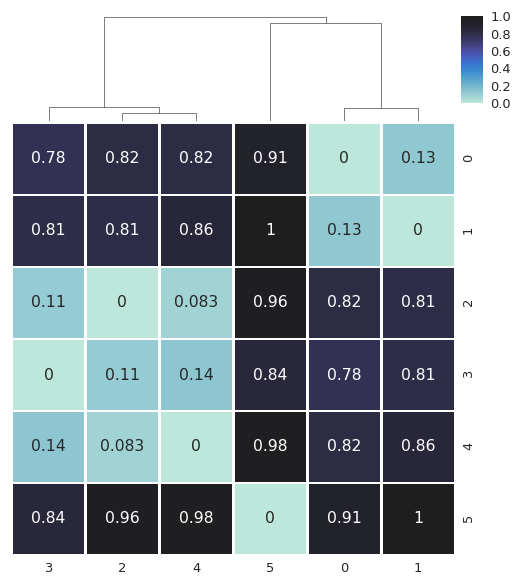}}
    \caption{Adjacency matrix and clustering of
      Fig~\ref{fig:example_traces}. Smaller numbers mean that the time
      series are more similar with respect to the logical distance
      metric.\label{fig:toy_cluster_map}}
  \end{wrapfigure}
  
  For a strawman comparison, we consider two ways the user might
  assign a distance measure to the above signal space. Further, we
  omit generic time series distance measures such as Dynamic Time
  Warping~\cite{keogh2000scaling} which do not offer the ability to
  embed domain specific knowledge into the metric.  At first, the user
  might treat the signals as a series of independent measurements and
  attempt to characterize the signals via standard statistical
  measures on the speed and acceleration (mean, standard deviation,
  \etc). Fig~\ref{fig:normalizedStats} illustrates how the example
  signals look in this feature space with each component normalized
  between 0 and 1. The user might then use the Euclidean distance of
  each feature to assign a distance between signals.  Unfortunately,
  in this measure, signal 4 is not close to signal 2 or 3, violating
  the second desired property. Further, signals 0 and 1 are not
  ``very'' close together violating the first property.  Next, the
  user attempts to capture traffic slow downs by the following
  (informal) parametric temporal specification: ``Between time $\tau$
  and 20, the car speed is always less than $h$.''.  As will be made
  precise in the preliminaries (for each individual time-series)
  Fig~\ref{fig:toy_boundaries}
  illustrates the boundaries between values of $\tau$ and $h$ that
  make the specification true and values which make the specification
  false. The techniques in~\cite{logicalClustering} then require the
  user to specify a particular total ordering on the parameter space.
  One then uses the maximal point on the boundary as the
  representative for the entire boundary. However, in practice,
  selecting a good ordering a-priori is non-obvious. For
  example,~\cite{logicalClustering} suggests a lexicographic ordering
  of the parameters. However, since most of the boundaries start and
  end at essentially the same point, applying any of the lexicographic
  orderings to the boundaries seen in Fig~\ref{fig:toy_boundaries}
  would result in almost all of the boundaries collapsing to the same
  points. Thus, such an ordering would make characterizing a slow down
  impossible.

  In the sequel, we propose using the Hausdorff distance between
  boundaries as a general ordering-free way to endow time series with
  a ``logic respecting distance
  metric''. Fig~\ref{fig:toy_cluster_map} illustrates the distances
  between each boundary. As is easily confirmed, all 3 properties
  desired of the clustering algorithm hold.

  \mypara{Contributions} The key insight in our work is that in many
  interesting examples, the distance between satisfaction boundaries
  in the parameter space of parametric logical formula can
  characterize the domain-specific knowledge implicit in the
  parametric formula.  Leveraging this insight we provide the
  following contributions:
  \begin{enumerate}[leftmargin=1.5em,itemsep=0pt,parsep=0pt,partopsep=0pt]
  \item We propose a new distance measure between time-series
    through the lens of a chosen monotonic specification.  Distance
    measure in hand, standard ML algorithms such as nearest neighbors
    (supervised) or agglomerative clustering (unsupervised) can be
    used to glean insights into the data.
  \item Given a labeling, we propose a method for computing
    representative points on each boundary. Viewed another way, we
    propose a form of dimensionality reduction based on the temporal
    logic formula.
  \item Finally, given the representative points and their labels, we
    can use the machinery developed in~\cite{logicalClustering} to
    extract a simple logical formula as a classifier for each label.
  \end{enumerate}


\section{Preliminaries}\label{sec:prelim}
The main object of analysis in this paper are time-series.\footnote{Nevertheless, the material presented in the sequel easily generalizes to other objects.}
\begin{definition}[Time Series, Signals, Traces] Let $\timedomain$ be a
  subset of $\PosZReals$ and $\domain$ be a nonempty
  set.  A time series (signal or trace), $\trc$ is
  a map:
  \begin{equation}
    \trc: \timedomain \to \domain
  \end{equation}
  Where $\timedomain$ and $\domain$ are called the time domain and
  value domain respectively.  The set of all time series is denoted
  by $\domain^\timedomain$.
\end{definition}
Between any two time series one can define a metric which measures
their similarity.
\begin{definition}[Metric]
  Given a set $X$, a metric is a map,
  \begin{equation}
    \label{eq:metric}
    d : X \times X \to \PosZReals
  \end{equation}
  such that $d(x, y) = d(y,x)$, $d(x, y) = 0 \iff x = y$,
  $d(x, z) \leq d(x, y) + d(y, z)$.
\end{definition}
\begin{example}[Infinity Norm Metric]
  The infinity norm induced distance $d_\infty(\vec{x}, \vec{y})
  \eqdef \max_i\left({|x_i - y_i|}\right)$ is a metric.
\end{example}
\begin{example}[Hausdorff Distance]
  Given a set $X$ with a distance metric $d$, the Hausdorff distance
  is a distance metric between closed subsets of $X$. Namely, given
  closed subsets $A, B \subseteq X$:
  \begin{equation}\label{eq:hausdorff}
    d_H(A, B) \eqdef \max\left(\sup_{x \in A}~\inf_{y \in B}(d(x, y)), \sup_{y \in B}~\inf_{x \in A}(d(y, x))\right )
  \end{equation}
  We use the following property of the Hausdorff distance
  throughout the paper: Given two sets $A$ and $B$, 
  there necessarily exists points $a\in A$ and $b\in B$ such
  that:
  \begin{equation}
    d_H(A, B) = d(a, b)
  \end{equation}
\end{example}

Within a context, the platonic ideal of a metric between traces
respects any domain-specific properties that make two elements
``similar''.\footnote{Colloquially, if it looks like a duck and
quacks like a duck, it should have a small distance to a duck.} A
logical trace property, also called a specification, assigns to each
timed trace a truth value.
\begin{definition}[Specification]
  A specification is a map, $\phi$, from time series to true or false.
  \begin{equation}
    \phi : \domain^\timedomain \to \{1, 0\}
  \end{equation}
  A time series, $\trc$, is said to satisfy a specification iff
  $\phi(\trc) = 1$.
\end{definition}
\begin{example}\label{ex:spec}
  Consider the following specification related to the specification
  from the running example:

  \begin{equation}    
    \phi_{ex}(\trc) \eqdef
    \mathbbm{1}\bigg [\forall t \in \timedomain ~.~ \big (t > 0.2
    \implies \trc(t) < 1 \big)\bigg](\trc)
  \end{equation}
  where $\mathbbm{1}[\cdot]$ denotes an indicator function.  Informally, this specification says that after
  $t=0.2$, the value of the time series, $x(t)$, is always
  less than 1.
\end{example}
Given a finite number of properties, one can then ``fingerprint'' a
time series as a Boolean feature vector. That is, given $n$
properties, $\phi_1 \ldots \phi_n$ and the corresponding indicator
functions, $\phi_1 \ldots \phi_n$, we map each time series to
an $n$-tuple as follows.
\begin{equation}
  \trc \mapsto (\phi_1(\trc), \ldots, \phi_n(\trc))
\end{equation}

Notice however that many properties are not naturally captured by a
{\em finite\/} sequence of binary features. For example, imagine a
single quantitative feature $f : \domain^\timedomain \to [0, 1]$
encoding the percentage of fuel left in a tank. This feature
implicitly encodes an uncountably infinite family of Boolean features
$\phi_k(\trc) = \mathbbm{1}[f(\trc) = k](\trc)$ indexed by the percentages $k \in
[0, 1]$. We refer to such families as parametric specifications. For
simplicity, we assume that the parameters are a subset of the unit
hyper-box.
\begin{definition}[Parametric Specifications] A parametric
  specification is a map:
  \begin{equation}\label{eq:pspec}
    \f: \domain^\timedomain \to \bigg ({[0, 1]}^n \to \{0, 1\} \bigg )
  \end{equation}
  where $n\in \Nat$ is the number of parameters and
  $\bigg ({[0, 1]}^n \to \{0, 1\} \bigg )$ denotes the set of
  functions from the hyper-square, ${[0, 1]}^n$ to $\{0, 1\}$.
\end{definition}
\begin{remark}
  The signature, $\varphi : {[0, 1]}^n \to (\domain^\timedomain \to \setof{0, 1})$ would have been an alternative
  and arguably simpler definition of parametric specifications;
  however, as we shall see,~\eqref{eq:pspec} highlights that a
  trace induces a structure, called the validity domain, embedded in
  the parameter space.
\end{remark}
Parametric specifications arise naturally from syntactically
substituting constants with parameters in the description of a
specification.
\begin{example}\label{ex:p_spec}
  The parametric specification given in Ex~\ref{ex:spec} can be
  generalized by substituting $\tau$ for $0.2$ and $h$ for $1$ in
  Ex~\ref{ex:spec}.
  \begin{equation}
    \label{eq:param_spec}
    \f_{ex}(\trc)(\tau, h)  \eqdef \mathbbm{1}\bigg [\forall t \in \timedomain~.~  \big (t > \tau \implies \trc(t) < h\big )\bigg](\trc)
  \end{equation}
\end{example} At this point, one could naively extend the notion of
the ``fingerprint'' of a parametric specification in a similar manner
as the finite case. However, if ${[0,1]}^n$ is equipped with a
distance metric, it is fruitful to instead study the geometry induced
by the time series in the parameter space. To begin, observe that the
value of a Boolean feature vector is exactly determined by which
entries map to $1$. Analogously, the set of parameter values for
which a parameterized specification maps to true on a given time
series acts as the ``fingerprint''. We refer to this characterizing set
as the validity domain.
\begin{definition}[Validity domain] Given an $n$ parameter specification,
$\varphi$, and a trace, $\trc$, the validity domain
is the pre-image of 1 under $\varphi(\trc)$,
\begin{equation}
    \validitydomain_\f(\trc) \eqdef \text{PreImg}_{\varphi(\trc)}[1]
    =\bigg \{\theta \in {[0, 1]}^n~|~\f(\trc)(\theta) = 1\bigg \}
\end{equation}
\end{definition}
Thus, $\validitydomain_\f$, can be viewed as the map that returns the
structure in the parameter space indexed by a particular trace.

Note that in general, the validity domain can be arbitrarily complex making
reasoning about its geometry intractable. We circumvent
such hurdles by specializing to monotonic specifications.
\begin{definition}[Monotonic Specifications]\label{def:monotonic}
  A parametric specification is said to be monotonic if for all traces,
  $\trc$:
  \begin{equation}
    \theta \pOrder \theta' \implies \f(\trc)(\theta) \leq \f(\trc)(\theta')
  \end{equation}
  where $\pOrder$ is the standard product ordering on ${[0, 1]}^n$, \eg{} $(x, y) \leq (x', y')$ iff $(x < x' \wedge y < y')$.
\end{definition}
\begin{remark}
The parametric specification in Ex~\ref{ex:p_spec} is monotonic.  
\end{remark}
\begin{proposition}
  Given a monotonic specification, $\varphi$, and a time series, $\trc$,
  the boundary of the validity domain,
  $\validitydomainboundary_\f(x)$, of a monotonic specification
  is a hyper-surface that segments ${[0, 1]}^n$ into two
  components.
\end{proposition}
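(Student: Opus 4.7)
The plan is to leverage the monotonicity of $\f(\trc)$ to realize $\validitydomainboundary_\f(\trc)$ as the graph of a threshold function of $n-1$ variables, and then to use an intermediate-value-style argument to establish the separation. First, observe that Definition~\ref{def:monotonic} forces $\validitydomain_\f(\trc)$ to be upward-closed under $\pOrder$, and its complement $L \eqdef [0,1]^n \setminus \validitydomain_\f(\trc)$ to be downward-closed.

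Next, pick a coordinate direction, say $i = n$, and for each $\hat{\theta} \in [0,1]^{n-1}$ define the threshold
\begin{equation*}
g(\hat{\theta}) \eqdef \inf\!\left\{\, t \in [0,1] : \f(\trc)(\hat{\theta}_1, \ldots, \hat{\theta}_{n-1}, t) = 1\,\right\},
\end{equation*}
with the convention $g(\hat{\theta}) = +\infty$ when the vertical fiber is disjoint from $\validitydomain_\f(\trc)$. Monotonicity along the fiber ensures $\f(\trc) = 1$ strictly above $g(\hat{\theta})$ and $\f(\trc) = 0$ strictly below, so every vertical line in the cube either lies entirely in one of the two regions or crosses the boundary exactly once, at height $g(\hat{\theta})$. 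The order-reversing monotonicity of $g$ in the remaining coordinates, inherited from $\pOrder$-monotonicity of $\f(\trc)$, then implies that $\validitydomainboundary_\f(\trc)$, away from the faces of $[0,1]^n$, coincides with the graph of $g$; since the graph of any function of $n-1$ variables is an $(n-1)$-dimensional subset of $[0,1]^n$, this is a hyper-surface.

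For the separation property, let $p \in \mathrm{int}(\validitydomain_\f(\trc))$ and $q \in \mathrm{int}(L)$, and suppose for contradiction that some continuous path $\gamma : [0,1] \to [0,1]^n \setminus \validitydomainboundary_\f(\trc)$ joins them. Then $s \mapsto \f(\trc)(\gamma(s))$ takes values $1$ at $s=0$ and $0$ at $s=1$, so it must jump at some $s_\ast$; at that point every neighborhood meets both $\validitydomain_\f(\trc)$ and $L$, placing $\gamma(s_\ast) \in \validitydomainboundary_\f(\trc)$ and contradicting the choice of $\gamma$. Conversely, the interior of each of $\validitydomain_\f(\trc)$ and $L$ is itself path-connected, since any two of its points can be joined by first sliding in the diagonal direction $(1,\ldots,1)$ (respectively $(-1,\ldots,-1)$), a move that preserves membership in an upper (respectively lower) set.

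The delicate step is making the graph-of-$g$ identification rigorous where $g$ has jump discontinuities and on the faces of the cube; nothing in Definition~\ref{def:monotonic} forbids $\f(\trc)$ from having vertical cliffs along coordinate hyperplanes, and the same boundary point can then appear as the graph-image of an entire flat of inputs. I would address this by invoking the classical fact that any monotone real-valued function is continuous almost everywhere, so the portion of $\validitydomainboundary_\f(\trc)$ lying in the interior of $[0,1]^n$ coincides almost everywhere with $\mathrm{graph}(g)$, while the remaining pieces sit on lower-dimensional face strata of the cube and therefore do not jeopardize either the codimension-$1$ character of the boundary or the two-component separation asserted in the proposition.
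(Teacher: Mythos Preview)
The paper does not actually prove this proposition; it is stated without proof and the text moves directly to ``In the sequel, we develop a distance metric\ldots''. So there is no paper proof to compare against, and your write-up already goes well beyond what the authors supply.

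On the substance of your argument: the core ideas are sound. Upward-closure of $\validitydomain_\f(\trc)$ and downward-closure of its complement follow immediately from Definition~\ref{def:monotonic}, the fiberwise threshold $g$ is the natural way to exhibit the boundary as a codimension-one object, and the intermediate-value argument for separation is correct. Your path-connectedness step is essentially the observation that an upper set in $[0,1]^n$ is star-shaped with respect to $(1,\ldots,1)$ and a lower set is star-shaped with respect to $(0,\ldots,0)$; stating it that way would make the ``sliding in the diagonal direction'' remark crisper.

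Two small gaps worth patching. First, the proposition tacitly excludes the degenerate cases $\validitydomain_\f(\trc)=\emptyset$ and $\validitydomain_\f(\trc)=[0,1]^n$, in which the boundary is empty and there is only one component; you should note this assumption explicitly. Second, the ``graph of $g$'' description is not literally correct at jumps of $g$: there the boundary contains vertical segments, not single graph points, so $\validitydomainboundary_\f(\trc)$ is the \emph{closure} of $\mathrm{graph}(g)$ rather than the graph itself. Your almost-everywhere continuity remark gestures at this, but it would be cleaner to simply say that the boundary is the set $\{\theta : \text{every neighborhood of }\theta\text{ meets both }\validitydomain_\f(\trc)\text{ and }L\}$, observe that monotonicity forces each diagonal line $t\mapsto \theta_0 + t(1,\ldots,1)$ to cross this set in at most one point, and conclude the hyper-surface and two-component claims directly from that transversality, bypassing the coordinate-axis parametrization entirely. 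This is in fact the picture underlying the recursive boundary approximation of~\cite{maler:hal-01556243} that the paper invokes later.
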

In the sequel, we develop a distance metric between validity domains
which characterizes the similarity between two time series under the
lens of a monotonic specification.


\section{Logic-Respecting Distance Metric}\label{sec:metric}
In this section, we define a class of metrics on the signal space that
is derived from corresponding parametric specifications. First,
observe that the validity domains of monotonic specifications are
uniquely defined by the hyper-surface that separates them from the
rest of the parameter space. Similar to Pareto fronts in a
multi-objective optimization, these boundaries encode the trade-offs
required in each parameter to make the specification satisfied for a
given time series. This suggests a simple procedure to define a
distance metric between time series that respects their logical
properties: Given a monotonic specification, a set of time series, and
a distance metric between validity domain boundaries:
\begin{enumerate}
\item Compute the validity domain boundaries for each time series.
\item Compute the distance between the validity domain boundaries.
\end{enumerate}
Of course, the benefits of using this metric would rely entirely on
whether (i) The monotonic specification captures the relevant domain-specific details (ii) The distance between validity domain boundaries
is sensitive to outliers. While the choice of specification is highly
domain-specific, we argue that for many monotonic specifications, the
distance metric should be sensitive to outliers as this represents a
large deviation from the specification. This sensitivity requirement
seems particularly apt if the number of satisfying traces of the
specification grows linearly or super-linearly as the parameters
increase. Observing that Hausdorff distance~\eqref{eq:hausdorff}
between two validity boundaries satisfy these properties, we define
our new distance metric between time series as:
\begin{definition}
  Given a monotonic specification, $\varphi$, and a distance metric on
  the parameter space $({[0, 1]}^n, d)$, the logical distance between
  two time series, $\trc(t), \trcY(t) \in \domain^\timedomain$ is:
  \begin{equation}\label{eq:logical_distance}
    d_\varphi(\trc(t), \trcY(t)) \eqdef d_H\left (\validitydomainboundary_\varphi(\trc), \validitydomainboundary_\varphi(\trcY)\right)
  \end{equation}
\end{definition}

\subsection{Approximating the Logical Distance}
Next, we discuss how to approximate the logical distance metric 
within arbitrary precision. First, observe that the validity domain
boundary of a monotonic specification can be recursively approximated
to arbitrary precision via binary search on the diagonal of the
parameter space~\cite{maler:hal-01556243}. This approximation yields a
series of overlapping axis aligned rectangles that are guaranteed to
contain the boundary (see Fig~\ref{fig:computeBoundaries}).
\begin{figure}[H]
  \centering
  \includegraphics[height=1.3in]{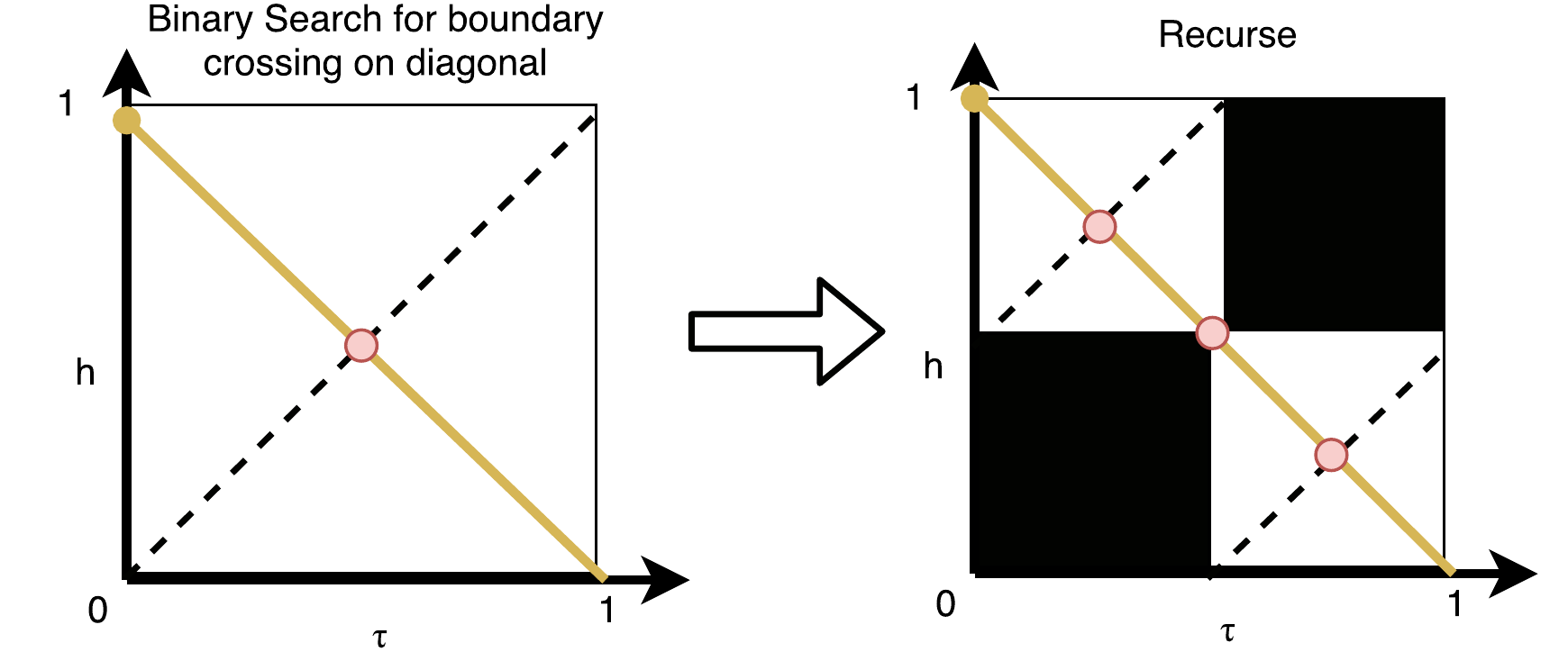}
  \caption{Illustration of procedure introduced
    in~\cite{maler:hal-01556243} to recursively approximate a validity
    domain boundary to arbitrary
    precision.}\label{fig:computeBoundaries}
\end{figure}
To formalize this approximation, let $I(\Reals)$ denote the set of
closed intervals on the real line. We then define an axis aligned
rectangle as the product of closed intervals.
\begin{definition}
  The set of axis aligned rectangles is defined as:
  \begin{equation}\label{eq:recs}
  I(\Reals^n) \eqdef \prod_{i=1}^n I(\Reals)
  \end{equation}
\end{definition}
The approximation given in~\cite{maler:hal-01556243} is then a family
of maps,
\begin{equation}\label{eq:approx_i}
  \text{approx}^i : \domain^\timedomain \to \PowerSet{I(\Reals^n)}
\end{equation}
where $i$ denotes the recursive depth $\PowerSet{\cdot}$ denotes the powerset.\footnote{
  The co-domain of~\eqref{eq:approx_i} could be tightened to $\big( 2^{n} - 2\big)^{i}$, but to avoid also parameterizing the discretization function, we do
not strengthen the type signature.}
For example, $\text{approx}^0$
yields the bounding box given in the leftmost subfigure in
Fig~\ref{fig:computeBoundaries} and $\text{approx}^1$ yields the
subdivision of the bounding box seen on the right.\footnote{ If the
rectangle being subdivided is degenerate, i.e., lies entirely within
the boundary of the validity domain and thus all point intersect the
boundary, then the halfway point of the diagonal is taken to be the
subdivision point.}

Next, we ask the question: Given a discretization of the rectangle set
approximating a boundary, how does the Hausdorff distance between the
discretization relate to the true Hausdorff distance between two
boundaries? In particular, consider the map that takes a set of
rectangles to the set of the corner points of the rectangles.
Formally, we denote this map as:
\begin{equation}\label{eq:corners}
  \text{discretize} : \PowerSet{I(\Reals^n)} \to \PowerSet{\Reals^n}
\end{equation}
As the rectangles are axis aligned, at this point, it is fruitful to
specialize to parameter spaces equipped with the infinity norm. The
resulting Hausdorff distance is denoted $d_H^\infty$. This
specialization leads to the following lemma:
\begin{lemma}\label{lem:error}
  Let $\trc$, $\trc'$ be two time series and $\mathcal{R}, \mathcal{R}'$
  the approximation of their respective boundaries. Further, let
  $p, p'$ be points in $\mathcal{R}, \mathcal{R}'$ such that:
  \begin{equation}
    \label{eq:5}
    \hat{d} \eqdef d_H^\infty(\text{discretize}(\mathcal{R}),
    \text{discretize}(\mathcal{R'})) = d_\infty(p, p')
  \end{equation}
  and let $r, r'$ be the rectangles in $\mathcal{R}$ and
  $\mathcal{R}'$ containing the points $p$ and $p'$ respectively. Finally, let
  $\frac{\epsilon}{2}$ be the maximum edge length in $\mathcal{R}$ and $\mathcal{R'}$, then:
  \begin{equation}\label{eq:4}
    \max(0, \hat{d} - \epsilon) \leq d_\f(\trc, \trc') \leq \hat{d} + \epsilon
  \end{equation}
\end{lemma}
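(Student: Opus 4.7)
The plan is to derive both inequalities from two applications of the triangle inequality, using the fact that the infinity-norm diameter of an axis-aligned rectangle equals its maximum edge length. The key geometric observations are: (i) each rectangle $r\in\mathcal{R}$ produced by the procedure of~\cite{maler:hal-01556243} actually contains a point of the true boundary $\validitydomainboundary_\f(\trc)$, and analogously for $\mathcal{R}'$; and (ii) since every rectangle has edge length at most $\epsilon/2$, the $L_\infty$ distance between any interior point of such a rectangle and any of its corners is bounded above by $\epsilon/2$. These two facts will let me shuttle between boundary points and corner points at a controlled cost.

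For the upper bound $d_\varphi(\trc,\trc')\le \hat d+\epsilon$, I will pick an arbitrary boundary point $b\in\validitydomainboundary_\varphi(\trc)$, locate the rectangle $r\in\mathcal{R}$ containing it, and select any corner $c$ of $r$, which by (ii) satisfies $d_\infty(b,c)\le\epsilon/2$. By the definition of $\hat d$ as a Hausdorff distance on the discretizations, there is a corner $c'\in\text{discretize}(\mathcal{R}')$ with $d_\infty(c,c')\le\hat d$. The corner $c'$ belongs to some rectangle $r'\in\mathcal{R}'$, which by (i) contains a point $b'\in\validitydomainboundary_\varphi(\trc')$ with $d_\infty(c',b')\le\epsilon/2$. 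Triangle inequality then yields $d_\infty(b,b')\le\epsilon/2+\hat d+\epsilon/2=\hat d+\epsilon$. Swapping the roles of $\trc$ and $\trc'$ handles the other direction of the $\sup$-$\inf$ in~\eqref{eq:hausdorff}, establishing the upper bound.

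For the lower bound $d_\varphi(\trc,\trc')\ge\max(0,\hat d-\epsilon)$, I will run the analogous argument in reverse: starting from any corner $c\in\text{discretize}(\mathcal{R})$, use (i) to find a boundary point $b$ in its rectangle within $\epsilon/2$; invoke $d_\varphi(\trc,\trc')$ to obtain a matching boundary point $b'\in\validitydomainboundary_\varphi(\trc')$ within $d_\varphi(\trc,\trc')$; then locate a rectangle $r'\in\mathcal{R}'$ containing $b'$ and pick any of its corners $c'$, which by (ii) lies within $\epsilon/2$ of $b'$. The triangle inequality gives $d_\infty(c,c')\le d_\varphi(\trc,\trc')+\epsilon$, and symmetrically for corners on the other side, so $\hat d\le d_\varphi(\trc,\trc')+\epsilon$. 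Combined with the trivial non-negativity of $d_\varphi$, this is the desired lower bound.

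The main technical subtlety I expect is justifying observation (i): the statement uses the fact that the rectangles returned by the diagonal binary-search procedure are guaranteed to intersect the true boundary (they are the rectangles whose diagonal endpoints straddle the satisfaction threshold). I would spell this out by appealing explicitly to the monotonicity of $\varphi$ and the construction in~\cite{maler:hal-01556243}: the diagonal of each retained rectangle has one endpoint with $\varphi(\trc)=1$ and the other with $\varphi(\trc)=0$, so by the monotonicity-induced separation proposition preceding the lemma, $\validitydomainboundary_\varphi(\trc)$ must cross the rectangle. Everything else is bookkeeping with the infinity-norm bound on rectangle diameters.
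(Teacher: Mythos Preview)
Your proposal is correct and follows essentially the same approach as the paper: the paper's proof also hinges on the two observations that each approximating rectangle intersects the true boundary and that the rectangle set over-approximates the boundary, then uses the $\epsilon/2$ edge-length bound to conclude that every point in a rectangle is within $\epsilon/2$ of the boundary in $L_\infty$. Your version is considerably more careful in spelling out both directions of the Hausdorff sup--inf via the triangle inequality; in fact, the paper's proof as written contains a slip (it concludes $\hat d - 2\epsilon \le d_\f \le \hat d + 2\epsilon$ rather than the stated $\epsilon$), which your detailed argument avoids.
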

\begin{proof}
  First, observe that (i) each rectangle intersects its boundary (ii)
  each rectangle set over-approximates its boundary.  Thus, by
  assumption, each point within a rectangle is at most $\epsilon/2$
  distance from the boundary w.r.t.\ the infinity norm. Thus, since
  there exist two points $p, p'$ such that
  $\hat{d} = d_\infty(p, p')$, the maximum deviation from the logical
  distance is at most $2\frac{\epsilon}{2} = \epsilon$ and
  $\hat{d} - 2\epsilon \leq d_\f(\trc, \trc') \leq \hat{d} +
  2\epsilon$. Further, since $d_\f$ must be in $\Reals^{\geq 0}$, the
  lower bound can be tightened to
  $\max(0, \hat{d} - 2\epsilon)$.~$\blacksquare$
\end{proof}
We denote the map given by~\eqref{eq:4} from the points to the error interval as:
\begin{equation}
  \label{eq:2}
 d_H^\infty \pm \epsilon : \PowerSet{\Reals} \times \PowerSet{\Reals} \to I(\mathbb{R}^+)
\end{equation}
Next, observe that this approximation can be made arbitrarily close to
the logical distance.
\begin{theorem}
  Let $d^\star= d_\f(\trc, \trcY)$ denote the logical distance between two
  traces $\trc, \trcY$.
  For any $\epsilon \in \Reals^{\geq 0}$, there exists $i\in \Nat$ such that:
  \begin{equation}
    \label{eq:arb_precision}
    d_H^\infty(\text{discretize}(\text{approx}^i(\mathcal{R})),
    \text{discretize}(\text{approx}^i(\mathcal{R}'))) \in [d^\star - \epsilon, d^\star + \epsilon]
  \end{equation}
\end{theorem}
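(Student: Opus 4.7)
The plan is to apply Lemma~\ref{lem:error} to the approximations at recursion depth $i$ and then drive the error parameter $\epsilon$ in that lemma to zero by taking $i$ sufficiently large. Concretely, write $\mathcal{R}_i = \text{approx}^i(\trc)$, $\mathcal{R}'_i = \text{approx}^i(\trcY)$, and let $\epsilon_i/2$ denote the maximum edge length appearing in $\mathcal{R}_i \cup \mathcal{R}'_i$. Lemma~\ref{lem:error} immediately gives
\begin{equation*}
  \bigl| d_H^\infty(\text{discretize}(\mathcal{R}_i),\text{discretize}(\mathcal{R}'_i)) - d^\star \bigr| \leq \epsilon_i,
\end{equation*}
so the theorem reduces to showing that $\epsilon_i \to 0$ as $i \to \infty$.

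For this I would appeal directly to the geometry of the recursive subdivision of~\cite{maler:hal-01556243} summarized in Fig.~\ref{fig:computeBoundaries}. Each recursive call replaces a rectangle by (a subset of) the sub-rectangles obtained by splitting along the midpoint of its diagonal; under the infinity norm this exactly halves the maximum edge length of that rectangle, and the degenerate-rectangle convention of splitting at the halfway point preserves this halving. Since every rectangle surviving to depth $i$ has been split $i$ times starting from a subset of the unit hyper-box ${[0,1]}^n$, one obtains the uniform bound $\epsilon_i \leq 2^{-i} \cdot \epsilon_0$, where $\epsilon_0 \leq 2$ is determined by the initial bounding box. Hence for any $\epsilon > 0$ it suffices to choose $i \geq \lceil \log_2(\epsilon_0/\epsilon) \rceil$, which yields $\epsilon_i \leq \epsilon$ and, combined with the Lemma~\ref{lem:error} bound above, the desired inclusion~\eqref{eq:arb_precision}.

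The main obstacle, and the place where I would spend the most care, is justifying the uniform halving claim. The subdivision is applied only to rectangles that \emph{cross} the validity-domain boundary, and one must verify that (i) the rectangles discarded at each step (those entirely inside or entirely outside $\validitydomain_\varphi$) do not contribute to $\text{discretize}(\mathcal{R}_i)$ in a way that inflates the Hausdorff distance, and (ii) the chosen diagonal-bisection really does shrink the infinity-norm edge length by a factor of two rather than merely shrinking the diagonal. Both follow from the construction of $\text{approx}^i$, but they should be stated explicitly as a short lemma ``max-edge-length at depth $i$ is at most $2^{1-i}$'' before plugging into Lemma~\ref{lem:error}. Everything else is an arithmetic choice of $i$.
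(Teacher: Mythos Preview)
Your proposal is correct and follows essentially the same route as the paper: invoke Lemma~\ref{lem:error} to bound the error by the maximum edge length, then argue that this maximum edge length halves at each recursion step so that sufficiently large $i$ achieves any prescribed tolerance. If anything, you are more careful than the paper, which simply asserts that ``at least one edge being halved'' implies the maximum edge length halves without addressing the two caveats you flag; your suggestion to isolate a ``max-edge-length at depth $i$ is at most $2^{1-i}$'' lemma would strengthen the exposition.
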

\begin{proof}
  By Lemma~\ref{lem:error}, given a fixed approximate depth, the above
  approximation differs from the true logical distance by at most two
  times the maximum edge length of the approximating rectangles.
  Note that by construction, incrementing the approximation depth
  results in each rectangle having at least one edge being
  halved. Thus the maximum edge length across the set of rectangles
  must at least halve. Thus, for any $\epsilon$ there exists an approximation
  depth $i\in \Nat$ such that:
  \[
    d_H^\infty(\text{discretize}(\text{approx}^i(\mathcal{R})),
    \text{discretize}(\text{approx}^i(\mathcal{R}'))) \in [d^\star - \epsilon, d^\star + \epsilon]~.~
   \blacksquare
  \]

\end{proof}
Finally, algorithm~\ref{alg:approx} summarizes the above procedure.
\begin{algorithm}[H]
  \scriptsize
  \caption{Approximate Logical Distance\label{alg:approx}}
  \begin{algorithmic}[1]
    \Procedure{approx\_dist}{$\trc, \trc', \delta$}
    \State{$lo, hi \gets 0, \infty$}
    \While{$hi - lo > \delta$}
    \State{$\mathcal{R}, \mathcal{R}' \gets \text{approx}^i(\trc), \text{approx}^i(\trc')$}
    \State{$\text{points}, \text{points}' \gets \text{discretize}(\mathcal{R}), \text{discretize}(\mathcal{R}')$}
    \State{$lo, hi \gets \big (d_H^\infty \pm \epsilon\big )(\mathcal{R}, \mathcal{R}')$}
    \EndWhile{}
    \State{\Return{$lo, hi$}}
    \EndProcedure{}
  \end{algorithmic}
\end{algorithm}

\begin{remark}
  An efficient implementation should of course memoize previous calls
to $\text{approx}^i$ and use $\text{approx}^i$ to compute
$\text{approx}^{i+1}$. Further, since certain rectangles can be
quickly determined to not contribute to the Hausdorff distance, they
need not be subdivided further.
\end{remark}

\subsection{Learning Labels}
The distance interval $(lo, hi)$ returned by Alg~\ref{alg:approx} can
be used by learning techniques, such as
\textit{hierarchical or agglomerative clustering}, to estimate
clusters (and hence the labels). While the technical details of these
learning algorithms are beyond the scope of this work, we formalize
the result of the learning algorithms as a labeling map:
\begin{definition}[Labeling]
  A $k$-labeling is a map:
  \begin{equation}
    \label{eq:labeling}
    L : \domain^\timedomain \to \{0, \ldots, k\}
  \end{equation}
  for some $k \in \Nat$. If $k$ is obvious from context or not
  important, then the map is simply referred to as a
  labeling.
\end{definition}
 


\section{Artifiact Extraction}
In practice, many learning algorithms produce labeling maps that
provide little to no insight into why a particular trajectory is given
a particular label. In the next section, we seek a way to
systematically summarize a labeling in terms of the parametric
specification used to induce the logical distance.
\subsection{Post-facto Projections}
To begin, observe that due to the nature of the Hausdorff distance,
when explaining why two boundaries differ, one can remove large
segments of the boundaries without changing their Hausdorff distance. This
motivates us to find a small summarizing set of parameters for
each label. Further, since the Hausdorff distance often reduces to
the distance between two points, we aim to summarize each boundary
using a particular projection map. Concretely,
\begin{definition}
  Letting $\validitydomainboundary_\f(\domain^\timedomain)$ denote the
  set of all possible validity domain boundaries, a projection is a
  map:
  \begin{equation}
    \label{eq:proj}
    \pi : \validitydomainboundary_\f(\domain^\timedomain) \to \Reals^n
  \end{equation}
  where $n$ is the number of parameters in $\f$.
\end{definition}
\begin{remark}
  In principle, one could extend this to projecting to a finite tuple
  of points. For simplicity, we do not consider such cases.
\end{remark}
Systematic techniques for picking the projection include
\textit{lexicographic projections} and solutions to
\textit{multi-objective optimizations}; however, as seen in the
introduction, a-priori choosing the projection scheme is subtle.
Instead, we propose performing a post-facto optimization of a
collection of projections in order to be maximally representative of
the labels. That is, we seek a projection, $\pi^*$, that maximally
disambiguates between the labels, i.e., maximizes the minimum distance
between the clusters. Formally, given a set of traces associated with
each label $L_1, \ldots L_k$ we seek:
\begin{equation}
  \label{eqn:proj}
  \pi^* \in \argmax_{\pi} \min_{i,j \in {k\choose 2}}  d_{\infty}(\pi(L_i), \pi(L_j))
\end{equation}
For simplicity, we restrict our focus to projections induced by the
intersection of each boundary with a line intersecting the base of the
unit box ${[0, 1]}^n$.  Just as in the recursive boundary
approximations, due to monotonicity, this intersection point is
guaranteed to be unique. Further, this class of projections is in
one-one correspondence with the boundary. In particular, for any point
$p$ on boundary, there exists exactly one projection that produces
$p$. As such, each projection can be indexed by a point in ${[0,
1]}^{n-1}$.

This is perhaps unsurprising given that in 2-d, one can index this
class by the angle with the $x$-axis and in 3-d on can include the
angle from the $z$-axis.
\begin{remark}
  Since we expect clusters of boundaries to be near each other, we
  also expect their intersection points to be near each other.
\end{remark}
\begin{remark}
  For our experiment, we searched for $\pi^*$ via a sweep through a
  discretized indexing of possible angles.
\end{remark}
\subsection{Label Specifications}
Next, observe that given a projection, when studying the infinity
norm distance between labels, it suffices to consider only the
bounding box of each label in parameter space. Namely,
letting $B : \PowerSet{\Reals^n} \to I[\Reals^n]$ denote
the map that computes the bounding box of a set of points in $\Reals^n$,
for any two labels $i$ and $j$:
\begin{equation}
  \label{eq:interpBounding}
  d_{\infty}(\pi(L_i), \pi(L_j)) =d_{\infty}(B\circ\pi(L_i), B\circ \pi(L_j)).
\end{equation}
This motivates using the projection's bounding box as a surrogate for
the cluster. Next, we observe that one can encode the set of
trajectories whose boundaries intersect (and thus can project to) a
given bounding box as a simple Boolean combination of the
specifications corresponding to instantiating $\f$ with the parameters
of at most $n+1$ corners of the box~\cite[Lemma
2]{logicalClustering}. While a detailed exposition is outside the
scope of this article, we illustrate with an example.
\begin{example}
  Consider examples 0 and 1 from the introductory example viewed
  as validity domain boundaries under~\eqref{eq:param_spec}. Suppose
  that the post-facto projection mapped example 0 to $(1/4, 1/2)$ and
  mapped example 1 to $(0.3, 0.51)$. Such a projection is plausibly near
  the optimal for many classes of projections since none of the other
  example boundaries (who are in different clusters) are near the
  boundaries for 0 and 1 at these points. The resulting specification
  is:
  \begin{equation}
    \begin{split}
      \phi&(\trc) = \f_{ex}(\trc)(1/4, 1/2)\wedge \neg\f_{ex}(\trc)(1/4, 0.51)\wedge \neg\f_{ex}(\trc)(0.3, 1/2)\\
      &= \mathbbm{1}\bigg[t \in [1/4, 0.3] \implies  \trc(t) \in [1/2, 0.51] \wedge t > 0.3 \implies \trc(t) \geq 0.51 \bigg]
    \end{split}
  \end{equation}
\end{example}

\subsection{Dimensionality Reduction}
  \begin{wrapfigure}{r}{0.5\textwidth}
    \centering
    \includegraphics[width=0.5\textwidth]{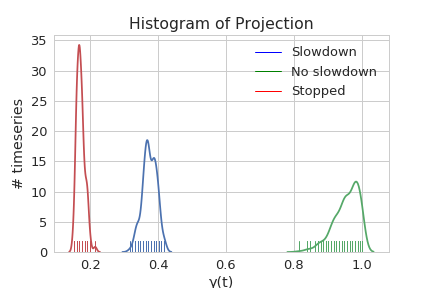}
    \caption{Figure of histogram resulting from projecting
      noisy variations of the traffic slow down example time
      series onto the diagonal of the unit box.}\label{fig:dim_reduce}
  \end{wrapfigure}
Finally, observe that the line that induces the projection can serve
as a mechanism for dimensionality reduction. Namely, if one
parameterizes the line $\gamma(t)$ from $[0, 1]$, where $\gamma(0)$ is
the origin and $\gamma(1)$ intersects the unit box, then the points
where the various boundaries intersect can be assigned a number
between $0$ and $1$. For high-dimensional parameter spaces, this
enables visualizing the projection histogram and could even be used
for future classification/learning. We again illustrate using our
running example. 
\begin{example}
  For all six time series in the traffic slow down example, we
  generate 100 new time series by modulating the time series with
  noise drawn from $\mathcal{N}(1, 0.3)$. Using our previously labeled
  time series, the projection using the line with angle $45^\circ$ (i.e.,
  the diagonal of the unit box) from the x-axis yields the
  distribution seen in Fig~\ref{fig:dim_reduce}. Observe that all three
  clusters are clearly visible.
\end{example}

\begin{remark}\label{rem:pca}
  If one dimension is insufficient, this procedure can be extended to
  an arbitrary number of dimensions using more lines. An interesting
  extension may be to consider how generic dimensionality techniques
  such as principle component analysis would act in the limit where
  one approximates the entire boundary.
\end{remark}

\section{Case Study}\label{sec:casestudies}
To improve driver models and traffic on highways, the Federal Highway
Administration collected detailed traffic data on southbound US-101
freeway, in Los Angeles~\cite{NGSIM}. Traffic through the segment was
monitored and recorded through eight synchronized cameras, next to the
freeway. A total of $45$ minutes of traffic data was recorded
including vehicle trajectory data providing lane positions of each
vehicle within the study area. The data-set is split into $5979$ time
series. For simplicity, we constrain our focus to the car's speed. In
the sequel, we outline a technique for first using the parametric
specification (in conjunction with off-the-shelf machine learning
techniques) to filter the data, and then using the logical distance
from an idealized slow down to find the slow downs in the data. This
final step offers a key benefit over the closest prior
work~\cite{logicalClustering}. Namely given an over approximation of
the desired cluster, one can use the logical distance to further
refine the cluster.

\mypara{Rescale Data} As in our running example, we seek to
use~\eqref{eq:param_spec} to search for traffic slow downs; however,
in order to do so, we must re-scale the time series. To begin, observe
that the mean velocity is 62mph with 80\% of the vehicles remaining
under 70mph. Thus, we linearly scale the velocity so that
$70\text{mph} \mapsto 1\text{ arbitrary unit (a.u.)}$. Similarly, we
re-scale the time axis so that each tick is $2$
seconds. Fig~\ref{fig:normalized} shows a subset of the time series.
\begin{figure}[h]
  \centering
  \begin{subfigure}[t]{0.49\textwidth}
    \centering
    \includegraphics[width=\linewidth]{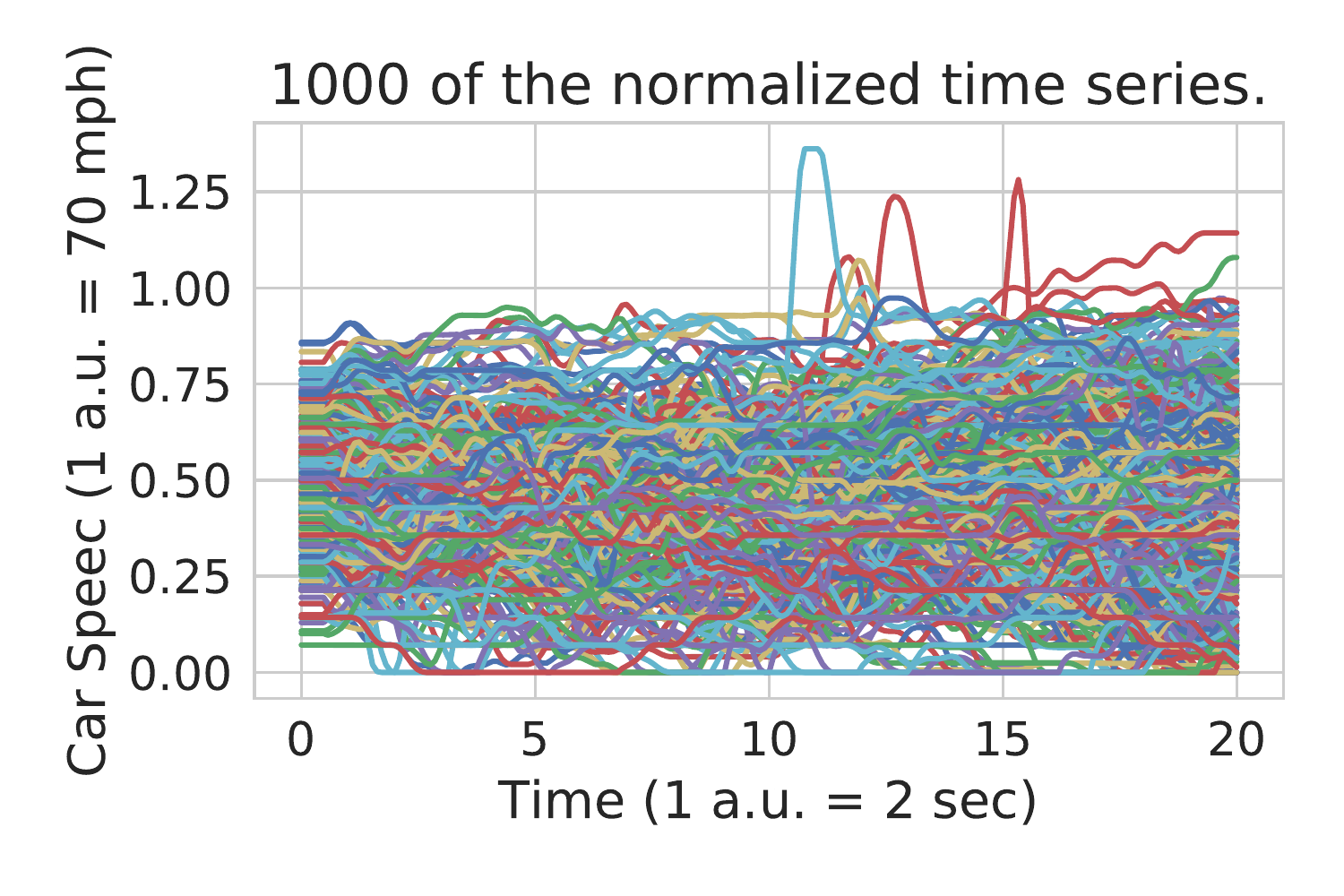}
    \caption{1000 / 5000 of the rescaled highway 101 time
      series.}\label{fig:normalized}
  \end{subfigure}
  \begin{subfigure}[t]{0.49\textwidth}
    \centering \includegraphics[width=\textwidth]{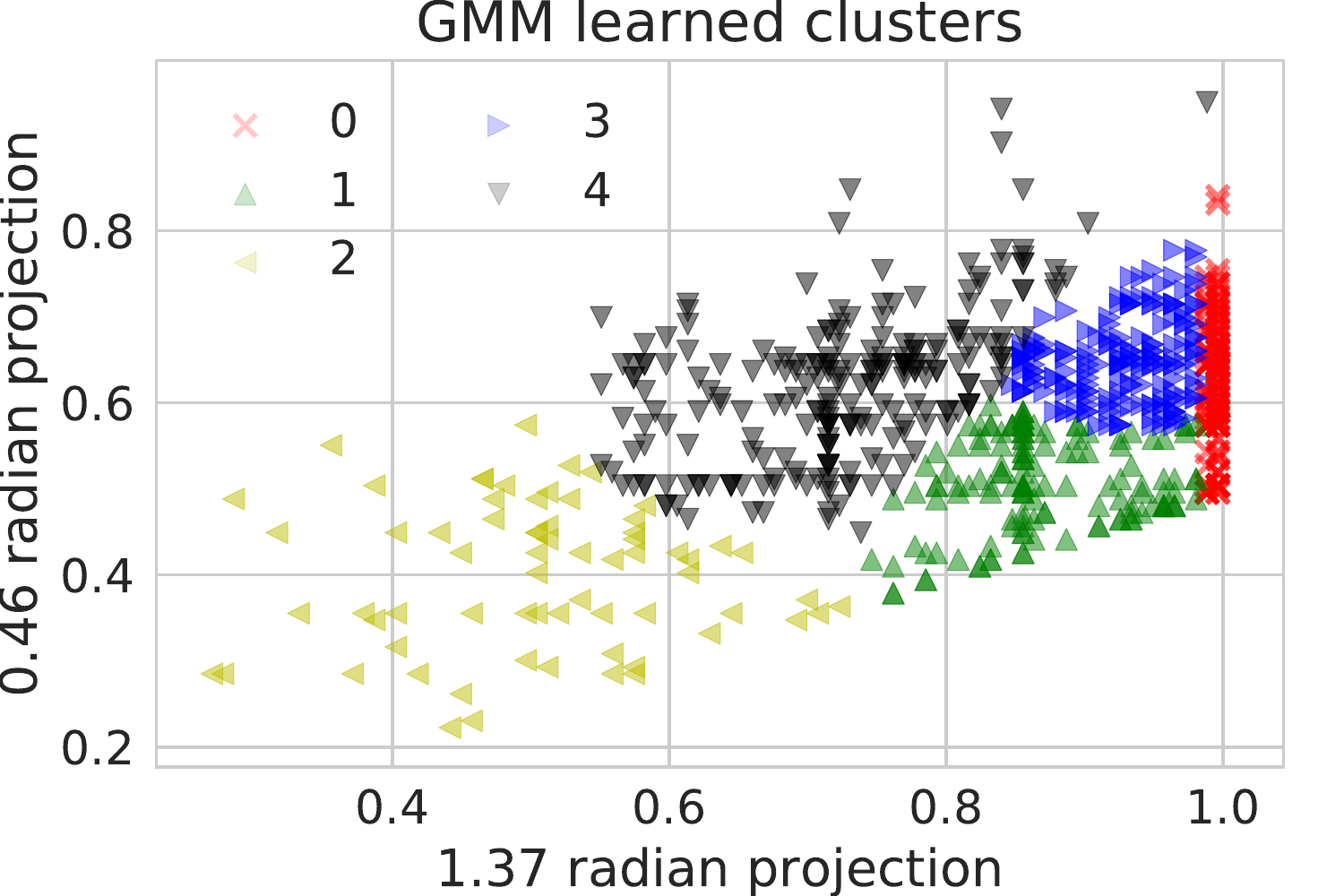}
    \caption{Projection of Time-Series to two lines in the parameter
      space of~\eqref{eq:param_spec} and resulting GMM
      labels. }\label{fig:gmm}
  \end{subfigure}
\end{figure}
  
\mypara{Filtering} Recall that if two boundaries have small Hausdorff
distances, then the points where the boundaries intersect a line (that
intersects the origin of the parameter space) must be close. Since
computing the Hausdorff distance is a fairly expensive operation, we
use this one way implication to group time series which may be near
each other w.r.t.\ the Hausdorff distance.

In particular, we (arbitrarily) selected two lines intersecting the
parameter space origin at $0.46$ and $1.36$ radians from the $\tau$
axis to project to. We filtered out any time-series that did not
intersect the line within ${[0, 1]}^2$. We then fit a 5 cluster
Gaussian Mixture Model (GMM) to label the data. Fig~\ref{fig:gmm}
shows the result.

\begin{figure}[h]
  \centering
  \begin{subfigure}[t]{0.49\textwidth}
    \centering
    \includegraphics[width=\linewidth]{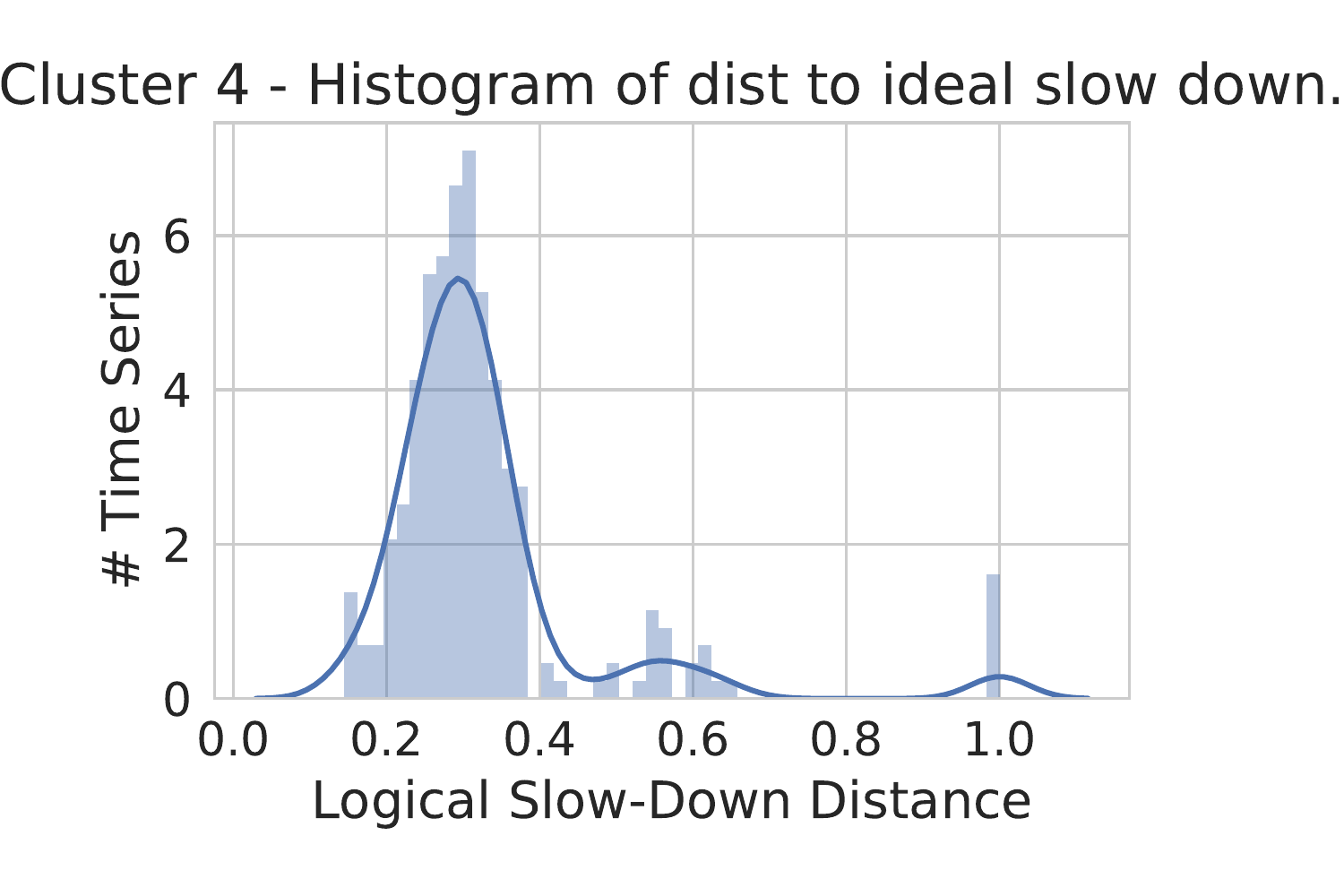}
    \caption{Cluster~4 Logical distance
      histogram.}\label{fig:c4_dist_dist}
  \end{subfigure}
  \begin{subfigure}[t]{0.49\textwidth}
    \centering
    \includegraphics[width=\textwidth]{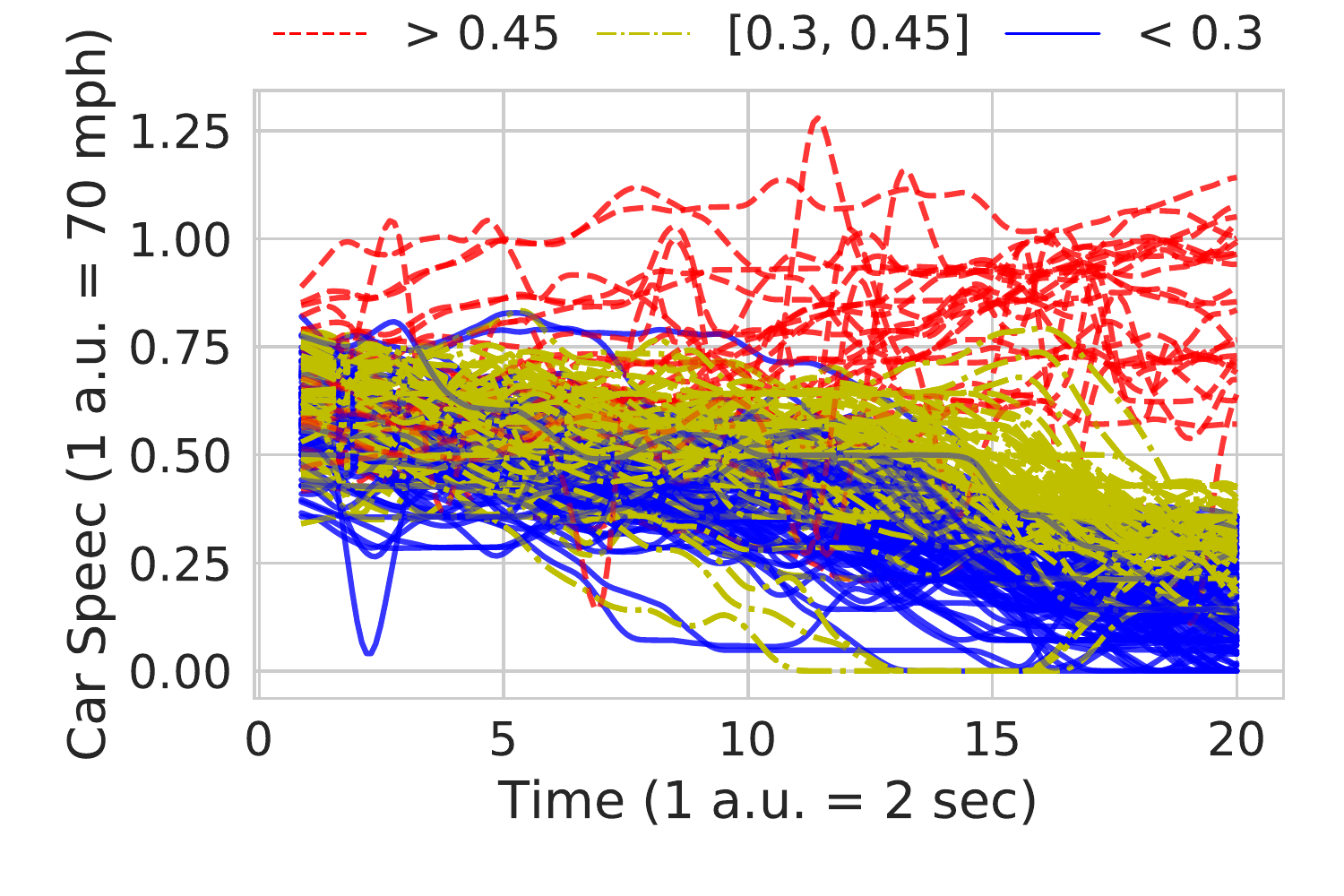}
    \caption{Time-series in Cluster 4 colored by distance to ideal
      slow down.}\label{fig:annotated_c4}
  \end{subfigure}
\end{figure}

\mypara{Matching Idealized Slow Down} Next, we labeled the idealized
slow down, (trace 0 from Fig~\ref{fig:toy_boundaries}) using the
fitted GMM.\@ This identified cluster~4 (with 765 data points) as
containing potential slow downs. To filter for the true slow downs, we
used the logical distance\footnote{again associated
  with~\eqref{eq:param_spec}} from the idealized slow down to further
subdivide the cluster. Fig~\ref{fig:c4_dist_dist} shows the resulting
distribution. Fig~\ref{fig:annotated_c4} shows the time series in
cluster 4 annotated by their distance for the idealized slow down.
Using this visualization, one can clearly identify 390 slow downs
(distance less than $0.3$)

\mypara{Artifact Extraction} Finally, we first searched for a single
projection that gave a satisfactory separation of clusters, but were
unable to do so. We then searched over pairs of projections to create
a specification as the conjunction of two box specifications.  Namely,
in terms of $\eqref{eq:param_spec}$, our first projection yields the
specification: $\phi_1 = \f_{ex}(0.27, 0.55) \wedge \neg \f_{ex}(0.38,
0.55) \wedge \neg \f_{ex}(0.27, 0.76)$. Similarly, our second
projection yields the specification: $\phi_2 = \f_{ex}(0.35, 0.17)
\wedge \neg \f_{ex}(0.35, 0.31) \wedge \neg \f_{ex}(0.62, 0.17)$. The
learned slow down specification is the conjunction of these two
specifications.




\section{Related Work}

Time-series clustering and classification is a well-studied area in
the domain of machine learning and data
mining~\cite{liao2005clustering}. Time series clustering that work
with raw time-series data combine clustering schemes such as
agglomerative clustering, hierarchical clustering, $k$-means
clustering among others, with similarity measures between time-series
data such as the dynamic time-warping (DTW) distance, statistical
measures and information-theoretic measures.  Feature-extraction based
methods typically use generic sets of features, but algorithmic
selection of the right set of meaningful features is a
challenge. Finally, there are model-based approaches that seek an
underlying generative model for the time-series data, and typically
require extra assumptions on the data such as linearity or the
Markovian property.  Please see~\cite{liao2005clustering} for detailed
references to each approach. It should be noted that historically
time-series learning focused on univariate time-series, and extensions
to multivariate time-series data have been relatively recent
developments.

More recent work has focused on automatically identifying features
from the data itself, such as the work on {\em
  shapelets\/}~\cite{ye2009time,mueen2011logical,lines2012shapelet},
where instead of comparing entire time-series data using similarity
measures, algorithms to automatically identify distinguishing motifs
in the data have been developed. These motifs or shapelets serve not
only as features for ML tasks, but also provide visual feedback to the
user explaining why a classification or clustering task, labels given
data, in a certain way. While we draw inspiration from this general
idea, we seek to expand it to consider logical shapes in the data,
which would allow leveraging user's domain expertise.

Automatic identification of motifs or basis functions from the data
while useful in several documented case studies, comes with some
limitations. For example, in~\cite{bahadori2015functional}, the
authors define a subspace clustering algorithm, where given a set of
time-series curves, the algorithm identifies a subspace among the
curves such that every curve in the given set can be expressed as a
linear combination of a deformations of the curves in the subspace. We
note that the authors observe that it may be difficult to associate
the natural clustering structure with specific predicates over the
data (such as patient outcome in a hospital setting).

The use of logical formulas for learning properties of time-series has
slowly been gaining momentum in communities outside of traditional
machine learning and data
mining~\cite{bartocci2014data,dTree,rPSTL,iPSTL}. Here, fragments of
Signal Temporal Logic have been used to perform tasks such as
supervised and unsupervised learning. A key distinction from these
approaches is our use of libraries of signal predicates that encode
domain expertise that allow human-interpretable clusters and
classifiers.

Finally, preliminary exploration of this idea appeared in prior work
by some of the co-authors in~\cite{logicalClustering}. The key
difference is the previous work required users to provide a ranking of
parameters appearing in a signal predicate, in order to project
time-series data to unique points in the parameter space. We remove
this additional burden on the user in this paper by proposing a
generalization that projects time-series signals to trade-off curves
in the parameter space, and then using these curves as features.


\mypara{Conclusion}
We proposed a family of distance metrics for time-series learning
centered {\em monotonic specifications\/} that respect the logical
characteristic of the specification. The key insight was to first map
each time-series to characterizing surfaces in the parameter space and
then compute the Hausdorff Distance between the surfaces. This enabled
embedding non-trivial domain specific knowledge into the distance
metric usable by standard machine learning. After labeling
the data, we demonstrate how this technique produces artifacts that
can be used for dimensionality reduction or as a logical specification
for each label. We concluded with a simple automotive case study
show casing the technique on real world data. Future work includes
investigating how to the leverage massively parallel natural in the boundary and Hausdorff computation using graphical processing units and characterizing
alternative boundary distances (see Remark~\ref{rem:pca}).


%
%
%
%
\bibliographystyle{splncs04}
\bibliography{refs}

\begin{thebibliography}{10}
\providecommand{\url}[1]{\texttt{#1}}
\providecommand{\urlprefix}{URL }
\providecommand{\doi}[1]{https://doi.org/#1}

\bibitem{bahadori2015functional}
Bahadori, M.T., Kale, D., Fan, Y., Liu, Y.: {Functional Subspace Clustering
  with Application to Time Series}. In: {Proc. of ICML}. pp. 228--237 (2015)

\bibitem{bartocci2014data}
Bartocci, E., Bortolussi, L., Sanguinetti, G.: Data-driven statistical learning
  of temporal logic properties. In: International conference on formal modeling
  and analysis of timed systems. pp. 23--37. Springer (2014)

\bibitem{dTree}
Bombara, G., Vasile, C.I., Penedo, F., Yasuoka, H., Belta, C.: {A Decision Tree
  Approach to Data Classification using Signal Temporal Logic}. In: {Proc. of
  HSCC}. pp. 1--10 (2016)

\bibitem{NGSIM}
Colyar, J., Halkias, J.: {US highway 101 dataset}. Federal Highway
  Administration (FHWA), Tech. Rep. FHWA-HRT-07-030  (2007)

\bibitem{deng2016latent}
Deng, D., Shahabi, C., Demiryurek, U., Zhu, L., Yu, R., Liu, Y.: Latent space
  model for road networks to predict time-varying traffic. In: Proceedings of
  the 22nd ACM SIGKDD International Conference on Knowledge Discovery and Data
  Mining. pp. 1525--1534. ACM (2016)

\bibitem{iPSTL}
Jones, A., Kong, Z., Belta, C.: {Anomaly detection in cyber-physical systems: A
  formal methods approach}. In: {Proc. of CDC}. pp. 848--853 (2014)

\bibitem{kale2014examination}
Kale, D.C., Gong, D., Che, Z., Liu, Y., Medioni, G., Wetzel, R., Ross, P.: {An
  Examination of Multivariate Time Series Hashing with Applications to Health
  Care}. In: {Data Mining (ICDM), 2014 IEEE International Conference on}. pp.
  260--269. IEEE (2014)

\bibitem{keogh2000scaling}
Keogh, E.J., Pazzani, M.J.: {Scaling up dynamic time warping for data mining
  applications}. In: {Proc. of KDD}. pp. 285--289 (2000)

\bibitem{rPSTL}
Kong, Z., Jones, A., Medina~Ayala, A., Aydin~Gol, E., Belta, C.: {Temporal
  logic inference for classification and prediction from data}. In: {Proc. of
  HSCC}. pp. 273--282 (2014)

\bibitem{liao2005clustering}
Liao, T.W.: {Clustering of time series data survey}. Pattern recognition
  \textbf{38}(11),  1857--1874 (2005)

\bibitem{lines2012shapelet}
Lines, J., Davis, L.M., Hills, J., Bagnall, A.: A shapelet transform for time
  series classification. In: Proceedings of the 18th ACM SIGKDD international
  conference on Knowledge discovery and data mining. pp. 289--297. ACM (2012)

\bibitem{liu_sparse}
Liu, Y., Bahadori, T., Li, H.: Sparse-gev: Sparse latent space model for
  multivariate extreme value time series modeling. In: Proc. of ICML (2012)

\bibitem{maler:hal-01556243}
Maler, O.: {Learning Monotone Partitions of Partially-Ordered Domains (Work in
  Progress)} (Jul 2017), \url{https://hal.archives-ouvertes.fr/hal-01556243},
  working paper or preprint

\bibitem{mccall2007driver}
McCall, J.C., Trivedi, M.M.: Driver behavior and situation aware brake
  assistance for intelligent vehicles. Proceedings of the IEEE  \textbf{95}(2),
   374--387 (2007)

\bibitem{mueen2011logical}
Mueen, A., Keogh, E., Young, N.: Logical-shapelets: an expressive primitive for
  time series classification. In: Proceedings of the 17th ACM SIGKDD
  international conference on Knowledge discovery and data mining. pp.
  1154--1162. ACM (2011)

\bibitem{logicalClustering}
Vazquez-Chanlatte, M., Deshmukh, J.V., Jin, X., Seshia, S.A.: Logic-based
  clustering and learning for time-series data. In: Proc. International
  Conference on Computer-Aided Verification (CAV) (2017)

\bibitem{ye2009time}
Ye, L., Keogh, E.: Time series shapelets: a new primitive for data mining. In:
  In Proc. of the 15th ACM SIGKDD international conference on Knowledge
  discovery and data mining. pp. 947--956. ACM (2009)

\end{thebibliography}

\end{document}